\theoremstyle{plain}
\newtheorem{theorem}{Theorem}[section]
\newtheorem{proposition}[theorem]{Proposition}
\theoremstyle{definition}
\newtheorem{definition}[theorem]{Definition}
\theoremstyle{remark}
\title{Beyond Prior Limits: Addressing Distribution Misalignment in Particle Filtering}
\author{%
  Yiwei Shi$^1$
  \thanks{}, Jingyu Hu $^1$, Yu Zhang$^2$, Mengyue Yang$^1$, Weinan Zhang$^3$, Cunjia Liu$^4$, Weiru Liu$^1$ \\
  $^1$ School of Engineering Mathematics and Technology, University of Bristol\\ 
   $^2$ School of Electronic and Information Engineering, Tongji University\\
  $^3$ Department of Computer Science \& Engineering, Shanghai Jiao Tong University\\
 $^4$ Department of Aeronautical and Automotive Engineering, Loughborough University\\
\texttt{\{yiwei.shi,jingyu.hu,mengyue.yang,weiru.liu\}@bristol.ac.uk} \\
\texttt{yu.zhang@tongji.edu.cn, c.liu5@lboro.ac.uk, wnzhang@sjtu.edu.cn} \\
}
\begin{document}

\maketitle

\begin{abstract}
Particle filtering is a Bayesian inference method and a fundamental tool in state estimation for dynamic systems, but its effectiveness is often limited by the constraints of the initial prior distribution, a phenomenon we define as the Prior Boundary Phenomenon. This challenge arises when target states lie outside the prior’s support, rendering traditional particle filtering methods inadequate for accurate estimation. Although techniques like unbounded priors and larger particle sets have been proposed, they remain computationally prohibitive and lack adaptability in dynamic scenarios. To systematically overcome these limitations, we propose the Diffusion-Enhanced Particle Filtering Framework, which introduces three key innovations: adaptive diffusion through exploratory particles, entropy-driven regularisation to prevent weight collapse, and kernel-based perturbations for dynamic support expansion. These mechanisms collectively enable particle filtering to explore beyond prior boundaries, ensuring robust state estimation for out-of-boundary targets. Theoretical analysis and extensive experiments validate framework's effectiveness, indicating significant improvements in success rates and estimation accuracy across high-dimensional and non-convex scenarios. 
\end{abstract}

\section{Introduction}

Particle filtering has become an essential tool in state estimation for dynamic systems, with widespread applications in fields such as target tracking \cite{djuric2008target}, robotics \cite{thrun2002particle}, and sensor data fusion \cite{caron2007particle}. At its core, particle filtering operates by maintaining a set of weighted particles that approximate the posterior distribution of the system’s state. Despite its effectiveness, particle filtering faces significant limitations due to the dependence on the initial prior distribution. In this study, we introduce the \textit{Prior Boundary Phenomenon (PBP)}, a term we define to describe how the particles’ support range is restricted to the region defined by the prior, rendering states outside this boundary inaccessible. To the best of our knowledge, this study is one of the first to systematically identify and analyse this phenomenon within the context of particle filtering

\begin{figure}[!t]
\centering
\subfigure[Prior Boundary Phenomenon in Particle filtering]{
  \includegraphics[width=0.9\linewidth,height=0.17\textheight]{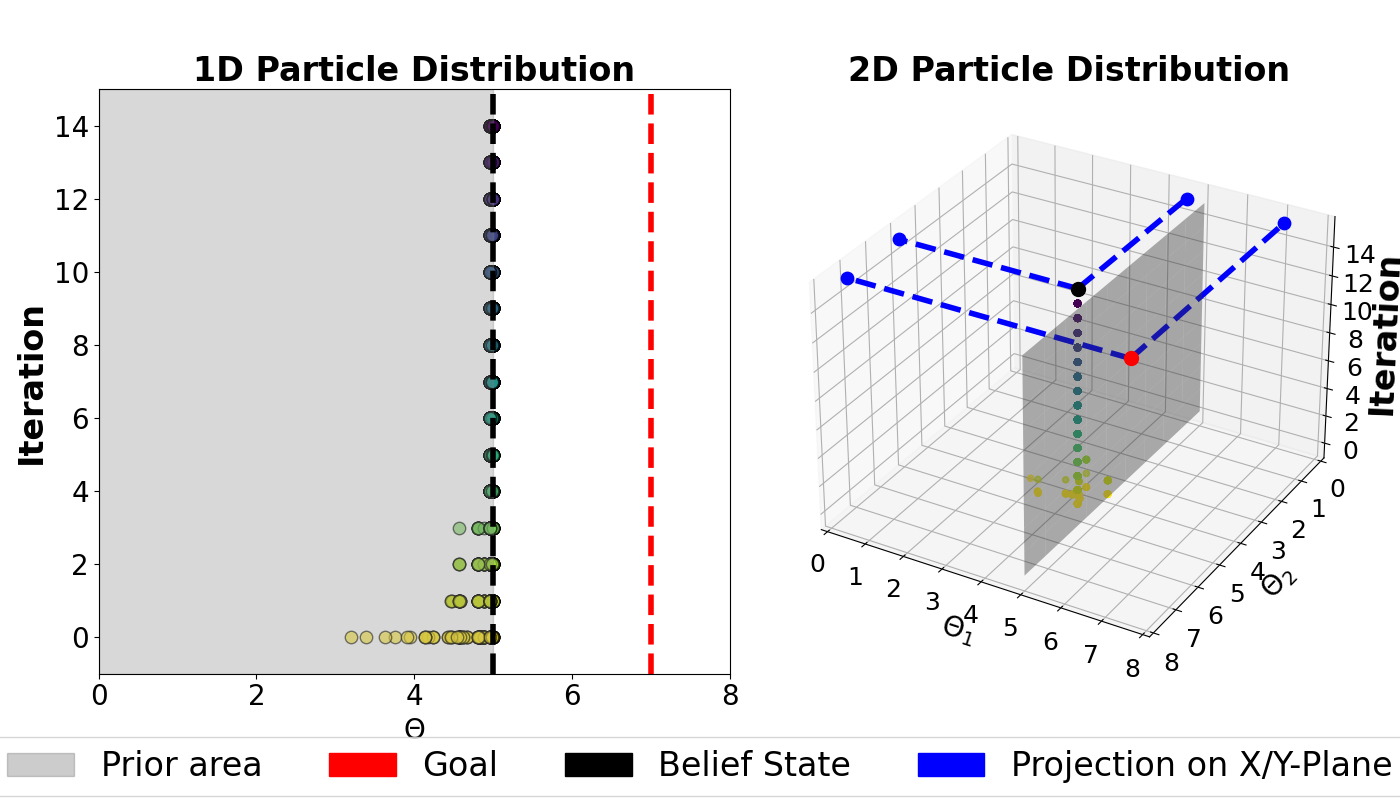}
  \label{fig:1Dand2D}}

\subfigure[Breaking the Prior Boundary in Particle Filtering]{\includegraphics[width=0.9\linewidth]{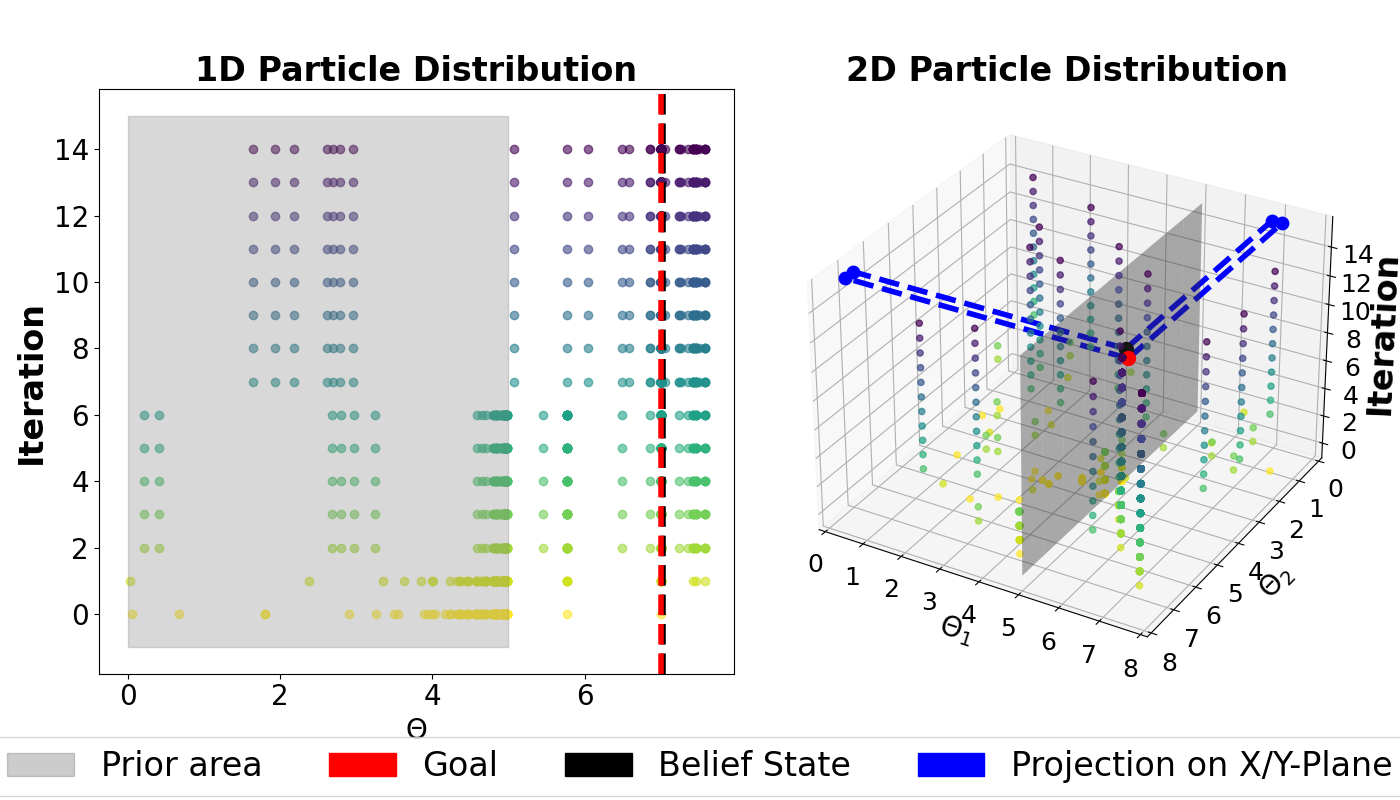}
  \label{fig:1Dand2D_mcmc}}
  \captionsetup{skip=5pt} 
\caption{Visualising the Prior Boundary Phenomenon: 1D and 2D}
\label{fig:1Dand2DVisualisation}
\end{figure}
\setlength{\textfloatsep}{5pt}

This limitation becomes particularly evident when considering the sample efficiency of particle filtering, which heavily depends on the initial prior distribution. The Prior Boundary Phenomenon confines particles to the region defined by the prior, significantly restricting their ability to explore and estimate states beyond this boundary. Such constraints are especially critical for tasks where the target state lies outside the support of the prior. To illustrate the impact of the Prior Boundary Phenomenon, we provide two examples. In the one-dimensional scenario depicted on the left of Fig. \ref{fig:1Dand2D}, a robot attempts to identify its position along a linear path \cite{doucet2001introduction,doucet2001sequential}. The target region spans the interval [0, 5], while the initial prior distribution is restricted to a smaller interval [4.9,5], reflecting the robot's limited initial belief about its location. In this setup, particles generated by traditional methods often remain confined within the narrow prior, even when the target lies outside this range. This occurs due to limited exploration capability in the dynamic model and insufficient correction from the observation model. Similarly, in the two-dimensional scenario \cite{hutchinson2018information} shown on the right of Fig. \ref{fig:1Dand2D}, a mobile robot is tasked with locating a chemical leak. Here, the target region is much larger than the prior distribution, which initially covers only a small subregion. Despite the robot’s attempts to update its position based on sensor readings, particles fail to escape the prior boundary, leaving critical areas unexplored. In both cases, these failures highlight the restrictive nature of traditional particle filtering methods when operating within constrained priors.

\textcolor{black}{The confinement of Particles within the prior boundary persists regardless of the prior distribution's form, such as \textit{Gaussian}, \textit{Beta}, \textit{Dirichlet}, or \textit{non-convex} shapes. Traditional particle filtering \cite{cappe2007overview,doucet2000sequential,Gordon1993} methods typically assume that the target state aligns with the prior's support, focusing on improvements in particle sampling \cite{bishop2012getting, zhou2016new} and resampling \cite{zhou2016new, kuptametee2022review}.
While these approaches may marginally expand the support, they come with significant computational costs and lack adaptability to dynamic environments or unknown states. Additionally, misalignments between the state transition and observation models further exacerbate these limitations, leaving the algorithms ill-equipped to handle cases where the target lies outside the predefined prior boundaries. These challenges underscore the need for more dynamic and flexible techniques capable of addressing boundary constraints, enhancing exploration, and improving particle filtering in complex and safety-critical scenarios.}

To address these limitations, we propose the \textit{Diffusion-Enhanced Particle Filtering (DEPF) Framework}, which integrates three key mechanisms: \textit{Exploratory Particles for Adaptive Diffusion}: A subset of particles is sampled from an extended state space, enabling exploration beyond the prior boundary and enhancing adaptability to new observations. \textit{Entropy-Driven Regularisation}: An entropy-based term is incorporated during weight updates to prevent weight collapse and encourage exploration of low-probability regions. \textit{Kernel-Based Perturbations for Local Expansion}: Gaussian kernels are used to perturb particle positions, dynamically expanding the support range and improving coverage of the target state space. These mechanisms collectively enable DEPF to adapt to new observations, extend the particle support range, and maintain computational efficiency. Theoretical guarantees for support range expansion are also established, ensuring robust and accurate state estimation.

The main contributions of this paper are as follows:  
1). We formalise the \textit{Prior Boundary Phenomenon}, providing a theoretical framework and rigorous proofs to explain the recursive confinement of particles within the prior boundary. This formalisation also includes mathematical guarantees for overcoming these constraints.  
2). We present the \textit{DEPF Framework}, which integrates adaptive diffusion, entropy-based regularisation, and kernel-based perturbations to systematically address prior boundary limitations.  
3). We validate DEPF through comprehensive experiments across diverse scenarios, including high-dimensional and non-convex state spaces. The results demonstrate its superior performance in overcoming prior boundary constraints and achieving robust state estimation.

\section{Related Work}
\textbf{Prior Region Misalignment in Bayesian Inference:} The effectiveness of particle filtering relies heavily on the alignment between the prior distribution and the true target state. Historically, misaligned priors have led to biased state estimations, a challenge highlighted by foundational works \cite{steel2010bayesian,kruschke2010bayesian}. Traditional techniques, such as importance sampling \cite{ristic2013particle}, operate effectively within prior boundaries but fail for states beyond the initial support. Methods like adaptive particle filters \cite{fox2001kld} and hybrid resampling strategies \cite{douc2005comparison} have sought to address sample efficiency but remain limited in their ability to dynamically adapt. 

\textbf{Bayesian Inference of Prior Boundary Phenomena:} The \textit{Prior Boundary Phenomenon}, formalised in this work, extends insights from particle filtering’s recursive limitations \cite{doucet2000sequential}. Traditional methods, including the bootstrap filter \cite{gordon1993novel}, confine particles within the prior’s initial support. This limitation has been underscored in dynamic contexts such as robot localisation \cite{thrun2002probabilistic} and sensor fusion \cite{candy2016bayesian}, where prior misalignment impacts performance. However, these methods focus on local mismatches and fail to address global misalignment or adaptive boundary expansion. 

\textbf{Bayesian Inference in Out-of-Distribution Problems:} Out-of-distribution (OOD) problems represent a critical challenge for Bayesian inference, as they require models to generalise beyond prior knowledge. Advances such as Posterior Networks \cite{charpentier2020posterior} and Bayesian OOD detection with uncertainty exposure \cite{wangbayesian} have addressed OOD detection in classification tasks but are less effective in sequential estimation. Traditional particle filtering methods, such as Rao-Blackwellised filters \cite{li2004estimation} and model-switching approaches \cite{moradkhani2005dual}, demonstrate limited adaptability to OOD scenarios.

\section{Peliminaries}
\label{sec:Peliminaries}

\subsection{Mathematical Description of Particle Filtering}

Particle filtering is a sequential Bayesian estimation technique designed to approximate the posterior distribution of dynamic systems, $P(x_t | y_{1:t})$, where $x_t$ represents the latent state at time $t$ and $y_{1:t}$ represents the sequence of observations up to time $t$. The method operates by maintaining a set of weighted particles $\Theta_t = \{x_t^{(i)}, w_t^{(i)}\}_{i=1}^N$, which provide a discrete approximation to the posterior distribution:
\[
P(x_t | y_{1:t}) \approx \sum_{i=1}^N w_t^{(i)} \delta(x_t - x_t^{(i)}),
\]
where $\delta(\cdot)$ is the Dirac delta function. The weights $w_t^{(i)}$ satisfy the normalization condition $\sum_{i=1}^N w_t^{(i)} = 1$.

Each particle $x_t^{(i)}$ represents a potential realisation of the system state, and its weight $w_t^{(i)}$ reflects the likelihood of the particle given the observed data $y_{1:t}$. Over time, the particles propagate through the state transition model $P(x_t | x_{t-1})$, while the weights are updated based on the observation model $P(y_t | x_t)$. This process enables particle filtering to dynamically adapt to new observations and approximate the posterior distribution in a computationally efficient manner.

\subsection{Prior Boundary Definition in Particle Filtering}

In particle filtering, the effectiveness of state estimation is fundamentally constrained by the prior distribution, which determines the initial support range of particles. This subsection formalises key concepts relevant to the prior boundary and its role in particle filtering:

\begin{definition}[Prior Boundary]
The prior boundary is the region of the state space where the prior distribution $P(x_0)$ has non-zero probability. Mathematically, it is defined as:
\[
\mathcal{S}_{\text{prior}} = \{x_0 \in \mathcal{X} : P(x_0) > 0\},
\]
where $\mathcal{X}$ denotes the entire state space. The prior boundary $\mathcal{S}_{\text{prior}}$ defines the initial region of interest that particles can explore at time $t = 0$.
\end{definition}

\begin{definition}[Support Range]
The support range at time $t$, denoted as $\mathcal{S}_t$, represents the subset of the state space where the particle distribution has non-zero probability. It evolves recursively based on the state transition model $P(x_t | x_{t-1})$ and the previous support range $\mathcal{S}_{t-1}$. Formally, it is expressed as:
\[
\small
\mathcal{S}_t = \bigcup_{i=1}^N \{x_t \in \mathcal{X} : P(x_t | x_{t-1}^{(i)}) > 0\}, \quad \text{where } x_{t-1}^{(i)} \in \mathcal{S}_{t-1}.
\]
\end{definition}
The recursive evolution of the support range is fundamentally constrained by the prior boundary.

The recursive relationship described in the proposition implies that the initial prior boundary $\mathcal{S}_{\text{prior}}$ imposes an absolute constraint on the regions of the state space that particles can reach during the filtering process. This highlights the critical dependence of particle filtering on the prior distribution and its boundary.

\subsection{Proof and Statement of the Prior Boundary Phenomenon}

Given the definitions of the prior boundary $\mathcal{S}_{\text{prior}}$ and the support range $\mathcal{S}_t$, we formally prove that the recursive nature of particle filtering confines the support range within the prior boundary, thereby limiting its ability to explore regions outside $\mathcal{S}_{\text{prior}}$.

\begin{proposition}[Recursive Confinement of Support Range]
The support range $\mathcal{S}_t$ at time $t$ satisfies the recursive relationship:
\[
\mathcal{S}_t \subseteq \mathcal{S}_{t-1} \subseteq \cdots \subseteq \mathcal{S}_{\text{prior}}, \quad \forall t \geq 0.
\]
with the base case:
\[
\mathcal{S}_0 = \mathcal{S}_{\text{prior}}.
\]
Thus, for any time $t \geq 0$, the support range is strictly confined by the prior boundary:
\[
\mathcal{S}_t \subseteq \mathcal{S}_{\text{prior}}.
\]
\end{proposition}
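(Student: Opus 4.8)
The plan is to prove the chain of inclusions by induction on $t$, treating the base case and the inductive step separately, with the key structural fact being that the recursive definition of $\mathcal{S}_t$ only ever unions over transition kernels seeded at points of $\mathcal{S}_{t-1}$.

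\textbf{Base case.} First I would establish $\mathcal{S}_0 = \mathcal{S}_{\text{prior}}$ directly from the definitions. At $t=0$ the particles $x_0^{(i)}$ are drawn from the prior $P(x_0)$, so every particle lies in $\{x_0 \in \mathcal{X} : P(x_0) > 0\} = \mathcal{S}_{\text{prior}}$, and conversely (in the idealized, infinite-particle or full-support sense the paper is implicitly using) the particle distribution has non-zero probability exactly on $\mathcal{S}_{\text{prior}}$. Hence $\mathcal{S}_0 = \mathcal{S}_{\text{prior}}$, which also trivially gives $\mathcal{S}_0 \subseteq \mathcal{S}_{\text{prior}}$.

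\textbf{Inductive step.} Assume $\mathcal{S}_{t-1} \subseteq \mathcal{S}_{\text{prior}}$ (and more generally $\mathcal{S}_{t-1} \subseteq \mathcal{S}_{t-2} \subseteq \cdots \subseteq \mathcal{S}_{\text{prior}}$). I want to show $\mathcal{S}_t \subseteq \mathcal{S}_{t-1}$. By the definition of the support range, $\mathcal{S}_t = \bigcup_{i=1}^N \{x_t \in \mathcal{X} : P(x_t \mid x_{t-1}^{(i)}) > 0\}$ where each $x_{t-1}^{(i)} \in \mathcal{S}_{t-1}$. The crucial observation is that the resampling/propagation mechanism of particle filtering never creates a particle $x_{t-1}^{(i)}$ outside the current support $\mathcal{S}_{t-1}$, and — this is the substantive modeling assumption I would make explicit — the transition kernel $P(x_t \mid x_{t-1})$ is assumed to have support contained in (a neighborhood that does not escape) $\mathcal{S}_{t-1}$; in the paper's idealized setting this amounts to assuming the dynamics do not push mass outside the region already reachable, so that $\{x_t : P(x_t \mid x_{t-1}^{(i)}) > 0\} \subseteq \mathcal{S}_{t-1}$ for each $i$. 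Taking the finite union over $i$ preserves this inclusion, giving $\mathcal{S}_t \subseteq \mathcal{S}_{t-1}$. Chaining with the inductive hypothesis yields $\mathcal{S}_t \subseteq \mathcal{S}_{t-1} \subseteq \cdots \subseteq \mathcal{S}_{\text{prior}}$, and in particular $\mathcal{S}_t \subseteq \mathcal{S}_{\text{prior}}$ for all $t \geq 0$.

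\textbf{Main obstacle.} The delicate point is the step $\{x_t : P(x_t \mid x_{t-1}^{(i)}) > 0\} \subseteq \mathcal{S}_{t-1}$: for a generic transition kernel (e.g. Gaussian noise) this is literally false, since the kernel has full support and would immediately expand $\mathcal{S}_t$ beyond $\mathcal{S}_{t-1}$. So the honest version of this proposition must either (a) restrict to the discrete/empirical support actually carried by the $N$ resampled particles rather than the analytic support of the kernel — in which case $\mathcal{S}_t$ is by construction a subset of the points generated from $\mathcal{S}_{t-1}$ and the confinement is essentially definitional — or (b) add an explicit assumption that the transition kernel's support is non-expanding relative to the reachable region. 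I would state this assumption up front (the paper's phrasing ``the recursive nature of particle filtering confines the support range'' suggests interpretation (a), the empirical-support reading), and then the proof is a short, clean induction; the real content is in making that interpretation precise rather than in any calculation. I would close by noting that this is exactly the structural bottleneck the DEPF framework is designed to break, since its exploratory-particle and kernel-perturbation steps deliberately inject mass outside $\mathcal{S}_{t-1}$, invalidating the non-expansion assumption and hence the conclusion.
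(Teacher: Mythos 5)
Your proof is correct and follows essentially the same route as the paper's: induction on $t$, with the base case $\mathcal{S}_0 = \mathcal{S}_{\text{prior}}$ and the inductive step that propagation (and resampling) seeded at particles in $\mathcal{S}_{t-1}$ cannot produce support outside $\mathcal{S}_{t-1}$. The only difference is that you make explicit the non-expanding-kernel (or empirical-support) assumption that the paper simply asserts via ``the state transition model \dots cannot generate particles outside $\mathcal{S}_{t-1}$'', which is a gain in rigor rather than a different argument.
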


\begin{proof}
By definition, the support range $\mathcal{S}_t$ is given as:
\[
\mathcal{S}_t = \bigcup_{i=1}^N \{x_t \in \mathcal{X} : P(x_t | x_{t-1}^{(i)}) > 0\}, \quad \text{where } x_{t-1}^{(i)} \in \mathcal{S}_{t-1}.
\]
For the base case at $t = 0$, we have:
\[
\mathcal{S}_0 = \mathcal{S}_{\text{prior}} = \{x_0 \in \mathcal{X} : P(x_0) > 0\}.
\]
By induction, assume $\mathcal{S}_{t-1} \subseteq \mathcal{S}_{\text{prior}}$. Since particle propagation at time $t$ depends entirely on $x_{t-1}^{(i)} \in \mathcal{S}_{t-1}$ and the state transition model $P(x_t | x_{t-1})$, which cannot generate particles outside $\mathcal{S}_{t-1}$, it follows that:
\[
\mathcal{S}_t \subseteq \mathcal{S}_{t-1}.
\]
Combining this with the inductive hypothesis $\mathcal{S}_{t-1} \subseteq \mathcal{S}_{\text{prior}}$, we conclude:
\[
\mathcal{S}_t \subseteq \mathcal{S}_{\text{prior}}, \quad \forall t \geq 0.
\]
\end{proof}

This proposition implies that if a target state $x_g$ satisfies:
\[
x_g \notin \mathcal{S}_{\text{prior}},
\]
then $x_g \notin \mathcal{S}_t$ for all $t \geq 0$. In other words, the particle filter cannot generate particles in the vicinity of $x_g$, and the posterior distribution $P(x_t | y_{1:t})$ cannot approximate the true posterior for states outside the prior boundary. The detailed proof is provided in Appendix~\ref{seb:ProofProposition}.

\begin{definition}[Prior Boundary Phenomenon]
The prior boundary phenomenon states that in particle filtering, the recursive evolution of the support range $\mathcal{S}_t$ is strictly bounded by the initial prior boundary $\mathcal{S}_{\text{prior}}$. If a target state $x_g$ lies outside the prior boundary, i.e., $x_g \notin \mathcal{S}_{\text{prior}}$, the particle filter will converge entirely within $\mathcal{S}_{\text{prior}}$ and fail to explore or estimate states outside this region.
\end{definition}

This phenomenon highlights a fundamental limitation of particle filtering, where the prior distribution completely determines the accessible regions of the state space. The inability to estimate target states outside $\mathcal{S}_{\text{prior}}$ forms the central research problem addressed in this work.

\section{Methodology}
Particle filtering operates as a sequential Bayesian estimation framework, leveraging particles and their associated weights to approximate posterior distributions. While effective in many scenarios, the recursive nature of the particle filtering process inherently limits the particle support range $\mathcal{S}_t$ to the prior boundary $\mathcal{S}_{\text{prior}}$, as defined by the initial prior distribution $P(x_0)$. This limitation, known as the Prior Boundary Phenomenon, arises because particles cannot be generated outside $\mathcal{S}_{\text{prior}}$, restricting the filter's ability to estimate states beyond this range.

To overcome the constraints imposed by the Prior Boundary Phenomenon, this section introduces an enhanced particle filtering framework. The proposed methodology combines adaptive exploration, entropy regularisation, and kernel-based perturbations to expand the effective support range of particles, allowing the particle filter to explore and estimate states outside $\mathcal{S}_{\text{prior}}$. Below, we first describe the standard steps of particle filtering, followed by the proposed modifications to address this phenomenon.

\
\subsection{Particle Filtering: Recursive Approximation of Posterior Distributions}
The particle filtering process can be summarised in four recursive steps: sampling, weight update, normalization, and resampling. These steps enable the particle filter to sequentially approximate the posterior distribution $P(x_t | y_{1:t})$ by adapting the particle set to new observations.

\textbf{Step 1. Sampling  (Prediction):} \cite{gordon1993novel} In this step, particles are drawn from an importance distribution \( q(x_t | x_{t-1}, y_t) \), which reflects the conditional probability of the current state given the previous state and the current observation. Typically, the state transition model \( P(x_t | x_{t-1}) \) is used as an approximation of the importance distribution. The propagation of particles can thus be expressed as:
\[
x_t^{(i)} \sim P(x_t | x_{t-1}^{(i)}),
\]
where \( x_t^{(i)} \) denotes the \( i \)-th particle at time \( t \). This step models the dynamics of the system, ensuring that particles evolve in accordance with the state transition process.

\textbf{Step 2. Weight Update:} \cite{doucet2001introduction} After sampling, particle weights are updated to reflect their relative importance with respect to the posterior distribution. The unnormalized weight of a particle is computed using the importance weight formula:
\[
\tilde{w}_t^{(i)} = w_{t-1}^{(i)} \cdot \frac{P(y_t | x_t^{(i)}) P(x_t^{(i)} | x_{t-1}^{(i)})}{q(x_t^{(i)} | x_{t-1}^{(i)}, y_t)},
\]
where \( P(y_t | x_t^{(i)}) \) is the observation model, and \( q(x_t | x_{t-1}, y_t) \) represents the importance distribution used during the sampling step.

\textbf{Step 3. Normalization of Weights:} \cite{liu1998sequential} The weights are then normalized to ensure they sum to 1, maintaining their probabilistic interpretation:
\[
w_t^{(i)} = \frac{\tilde{w}_t^{(i)}}{\sum_{j=1}^N \tilde{w}_t^{(j)}}.
\]
This normalization allows the particle filter to accurately represent the posterior distribution.

\textbf{Step 4. Resampling:} \cite{doucet2000sequential} Over time, particle weights tend to suffer from the \textit{weight degeneracy problem}, where a small number of particles dominate the weight distribution. To address this, a resampling step is performed when the effective number of particles:
\[
N_{\text{eff}} = \frac{1}{\sum_{i=1}^N (w_t^{(i)})^2}
\]
falls below a predefined threshold. Resampling generates a new particle set by sampling from the current set according to their weights. The weights of the resampled particles are then reset uniformly:
\[
w_t^{(i)} = \frac{1}{N}.
\]

These four steps operate recursively to approximate the posterior distribution \( P(x_t | y_{1:t}) \) while adaptively updating the particle set to incorporate new observations. However, the recursive nature of this process confines the particle set to regions defined by the prior boundary \( \mathcal{S}_{\text{prior}} \), limiting the filter’s ability to estimate states outside this range. This limitation, known as the Prior Boundary Phenomenon, necessitates methodological enhancements, which are proposed in the following subsection.

\subsection{Diffusion-Driven Support Range Expansion}

The Prior Boundary Phenomenon arises due to the recursive confinement of the particle support range \( \mathcal{S}_t \) within the initial prior boundary \( \mathcal{S}_{\text{prior}} \). If a target state \( x_g \notin \mathcal{S}_{\text{prior}} \), no particles can be generated near \( x_g \), resulting in the failure of state estimation. To address this, we propose a diffusion-driven particle filtering framework that incorporates dynamic exploration, entropy-driven diffusion regularisation, and kernel-based stochastic perturbations to expand the effective support range.

\textbf{Adaptive Diffusion through Exploratory Particles}\\
At each time step, a subset of particles is designated as \textit{exploratory particles}, which introduce a uniform diffusion process into the framework. These particles are sampled from an extended bounding box \(\mathcal{B}\) that covers regions beyond \( \mathcal{S}_{\text{prior}} \):
\[
x_t^{(j)} \sim \mathcal{U}(\mathcal{B}), \quad j \in \mathcal{E},
\]
where \( \mathcal{B} \) defines the extended state space, and \( \mathcal{E} \) represents the indices of exploratory particles. The exploratory particles are initialised with small weights:
\[
w_t^{(j)} = \frac{\epsilon}{|\mathcal{E}|}, \quad \epsilon \ll 1.
\]
This diffusion mechanism enables the particle filter to sample states outside the original support range \( \mathcal{S}_{\text{prior}} \), thereby increasing the likelihood of reaching states such as \( x_g \notin \mathcal{S}_{\text{prior}} \).

\textbf{Entropy-Driven Diffusion Regularisation}\\
To ensure that the exploratory diffusion does not collapse prematurely, an entropy regularisation term is added during the weight update step. This regularisation diffuses the weights across all particles, encouraging exploration of low-probability regions:
\[
w_t^{(i)} = w_t^{(i)} + \beta H,
\]
where \( H \) is the entropy of the weight distribution, defined as:
\[
H = -\sum_{i=1}^N w_t^{(i)} \log(w_t^{(i)} + \epsilon).
\]
By penalising weight distributions that become overly concentrated, this mechanism promotes balanced diffusion across the state space. The diffusion of weights helps exploratory particles retain influence and encourages the discovery of regions beyond \( \mathcal{S}_{\text{prior}} \).

\textbf{Kernel-Induced Stochastic Diffusion}\\
To further expand the particle support range dynamically, we introduce a stochastic diffusion mechanism based on kernel perturbations. Each particle \( x_t^{(i)} \) is perturbed by a Gaussian kernel that models diffusion within the local neighbourhood:
\[
\Delta x_t^{(i)} \sim h_{\text{opt}} \cdot \mathcal{L} \cdot \mathcal{N}(0, I),
\]
where:
- \( h_{\text{opt}} = A \cdot N^{-\frac{1}{n+4}} \) is the optimal kernel bandwidth dynamically adjusted to balance exploration and precision;
- \( \mathcal{L} \) is the lower triangular matrix obtained from the Cholesky decomposition of the covariance matrix \( \Sigma \), ensuring diffusion adapts to the local particle distribution.

The covariance matrix \( \Sigma \) is computed dynamically:
\[
\Sigma = \sum_{i=1}^N w_t^{(i)} (x_t^{(i)} - \mu)(x_t^{(i)} - \mu)^T + \lambda I,
\]
where \( \mu = \sum_{i=1}^N w_t^{(i)} x_t^{(i)} \) is the weighted mean, and \( \lambda > 0 \) ensures positive definiteness of \( \Sigma \).

This perturbation mechanism expands the effective support range by introducing stochastic diffusion, allowing particles to explore new regions iteratively:
\[
x_t^{(i)} \leftarrow x_t^{(i)} + \Delta x_t^{(i)}.
\]

\textbf{Diffusion-Driven Validation via MCMC}\\
To ensure consistency with the target posterior distribution, a Metropolis-Hastings (MCMC) acceptance criterion \cite{hastings1970monte} validates the diffused particles. For each perturbed particle \( x_t^{(i)} \), the acceptance probability is:
\[
\alpha_i = \frac{w_{\text{new}}^{(i)}}{w_{\text{old}}^{(i)}} \cdot \exp\left(-\frac{1}{2} \Delta x_t^{(i)^T} \Sigma^{-1} \Delta x_t^{(i)}\right).
\]
A uniformly sampled random variable \( u_i \sim \mathcal{U}(0, 1) \) determines whether the particle is accepted:
\[
x_t^{(i)} =
\begin{cases} 
x_t^{(i)}, & \text{if } \alpha_i \geq u_i, \\
x_t^{(i)} - \Delta x_t^{(i)}, & \text{otherwise}.
\end{cases}
\]
This step ensures that the diffusion-driven expansion aligns with the posterior distribution, preserving the accuracy of the particle filter.

\textbf{Diffusion-Enhanced Particle Filtering}\\
By integrating exploratory particles, entropy-driven diffusion regularisation, and kernel-induced stochastic perturbations, the proposed framework creates a dynamic diffusion process that iteratively expands the effective support range. The recursive relationship for the support range becomes:
\[
\mathcal{S}_{t+1} = (\mathcal{S}_t \cup \mathcal{B}) \oplus h_{\text{opt}},
\]
where \( \oplus h_{\text{opt}} \) represents kernel-induced stochastic diffusion.

This diffusion framework overcomes the Prior Boundary Phenomenon by continuously extending the particle filter's exploration capability, enabling robust state estimation for target states \( x_g \notin \mathcal{S}_{\text{prior}} \).

\subsection{Proof of Support Range Expansion Beyond the Prior Boundary}

Given the enhancements of exploratory diffusion (\(\mathcal{B}\)), entropy-driven regularisation, and kernel-induced stochastic perturbations (\(\oplus h_{\text{opt}}\)), we formally prove that the proposed framework enables the support range \( \mathcal{S}_t \) to expand beyond the prior boundary \( \mathcal{S}_{\text{prior}} \), enabling the particle filter to explore regions where \( x_g \notin \mathcal{S}_{\text{prior}} \).

\begin{proposition}[Expansion of Support Range]
With the proposed enhancements, the support range \( \mathcal{S}_t \) satisfies the recursive relationship:
\[
\mathcal{S}_t^{\text{new}} = \mathcal{S}_t \cup \mathcal{B}, \quad \mathcal{S}_{t+1} = \mathcal{S}_t^{\text{new}} \oplus h_{\text{opt}},
\]
where \( \mathcal{B} \) is the extended bounding box sampled by exploratory particles, and \( \oplus h_{\text{opt}} \) represents the kernel-induced expansion. Starting from the initial prior boundary:
\[
\mathcal{S}_0 = \mathcal{S}_{\text{prior}},
\]
the recursive updates ensure that for any target state \( x_g \in \mathcal{B} \), there exists a time step \( t \) such that:
\[
x_g \in \mathcal{S}_t.
\]
\end{proposition}

\begin{proof}

The proof proceeds in three steps, corresponding to the three key mechanisms of the unified diffusion framework.

\textbf{Step 1: Base Case (\(t = 0\))}

At \(t = 0\), the particle support range is defined by the prior distribution:
\[
\mathcal{S}_0 = \mathcal{S}_{\text{prior}} = \{x_0 \in \mathcal{X} : P(x_0) > 0\}.
\]

This forms the initial support range, which is confined within the prior boundary. Thus, at \(t = 0\), we have:
\[
\mathcal{S}_0 = \mathcal{S}_{\text{prior}}.
\]

\textbf{Step 2: Exploratory Diffusion (\(\mathcal{B}\))}

At each subsequent time step \(t > 0\), a subset of particles (exploratory particles) is sampled from a uniform distribution over an extended bounding box \(\mathcal{B}\):
\[
x_t^{(j)} \sim \mathcal{U}(\mathcal{B}), \quad j \in \mathcal{E}.
\]

The exploratory particles extend the support range to include regions outside \(\mathcal{S}_t\), producing an intermediate expanded range:
\[
\mathcal{S}_t^{\text{new}} = \mathcal{S}_t \cup \mathcal{B}.
\]

Since \(\mathcal{B}\) is designed to include regions where \(x_g \notin \mathcal{S}_{\text{prior}}\), this step ensures:
\[
x_g \in \mathcal{S}_t^{\text{new}}, \quad \text{if } x_g \in \mathcal{B}.
\]

\textbf{Step 3: Kernel-Induced Stochastic Diffusion (\(\oplus h_{\text{opt}}\))}

Each particle in \(\mathcal{S}_t^{\text{new}}\) undergoes stochastic perturbation through a Gaussian kernel. For any particle \(x_t^{(i)}\), the perturbation is given by:
\[
\Delta x_t^{(i)} \sim h_{\text{opt}} \cdot \mathcal{L} \cdot \mathcal{N}(0, I),
\]
where:
\(h_{\text{opt}} = A \cdot N^{-\frac{1}{n+4}}\) is the bandwidth optimally adjusted to the particle count (\(N\)) and state space dimensionality (\(n\)),
\(\mathcal{L}\) is derived from the Cholesky decomposition of the covariance matrix \(\Sigma\), ensuring diffusion aligns with the local particle distribution.

The kernel-induced perturbation expands the support range locally around each particle, resulting in:
\[
\mathcal{S}_{t+1} = \mathcal{S}_t^{\text{new}} \oplus h_{\text{opt}}.
\]

This step ensures that even if \(x_g \notin \mathcal{B}\), the stochastic diffusion mechanism can incrementally expand the support range to include \(x_g\) over multiple iterations.

\textbf{Step 4: Inductive Conclusion}

By induction, assume that at time \(t\), the support range satisfies:
\[
x_g \in \mathcal{S}_t, \quad \text{if } x_g \in (\mathcal{S}_{t-1} \cup \mathcal{B}) \oplus h_{\text{opt}}.
\]

Using Steps 2 and 3:
1. Exploratory diffusion (\(\mathcal{B}\)) ensures that \(x_g \in \mathcal{S}_t^{\text{new}}\) if \(x_g \in \mathcal{B}\).
2. Kernel-induced diffusion (\(\oplus h_{\text{opt}}\)) ensures \(x_g \in \mathcal{S}_{t+1}\) by expanding the neighbourhood around \(\mathcal{S}_t^{\text{new}}\).

Combining these, we conclude:
\[
x_g \in \mathcal{S}_{t+1}, \quad \forall x_g \in \mathcal{B}.
\]

Starting from \(\mathcal{S}_0 = \mathcal{S}_{\text{prior}}\), the recursive updates \((\mathcal{S}_t \cup \mathcal{B}) \oplus h_{\text{opt}}\) ensure that for any \(x_g \in \mathcal{B}\), there exists a finite \(t\) such that:
\[
x_g \in \mathcal{S}_t.
\]

Thus, the proposed diffusion-driven framework systematically expands the support range \(\mathcal{S}_t\) through recursive integration of exploratory particles (\(\mathcal{B}\)) and kernel-induced stochastic perturbations (\(\oplus h_{\text{opt}}\)). This unified approach ensures that the particle filter is no longer confined to the initial prior boundary \(\mathcal{S}_{\text{prior}}\), enabling robust state estimation for target states \(x_g \notin \mathcal{S}_{\text{prior}}\).

\end{proof}
\begin{figure*}[htbp]
\centering
\subfigure[TPF: Particle Numbers and Exploration Ratios]{
  \includegraphics[width=0.43\linewidth,height=0.31\textheight]{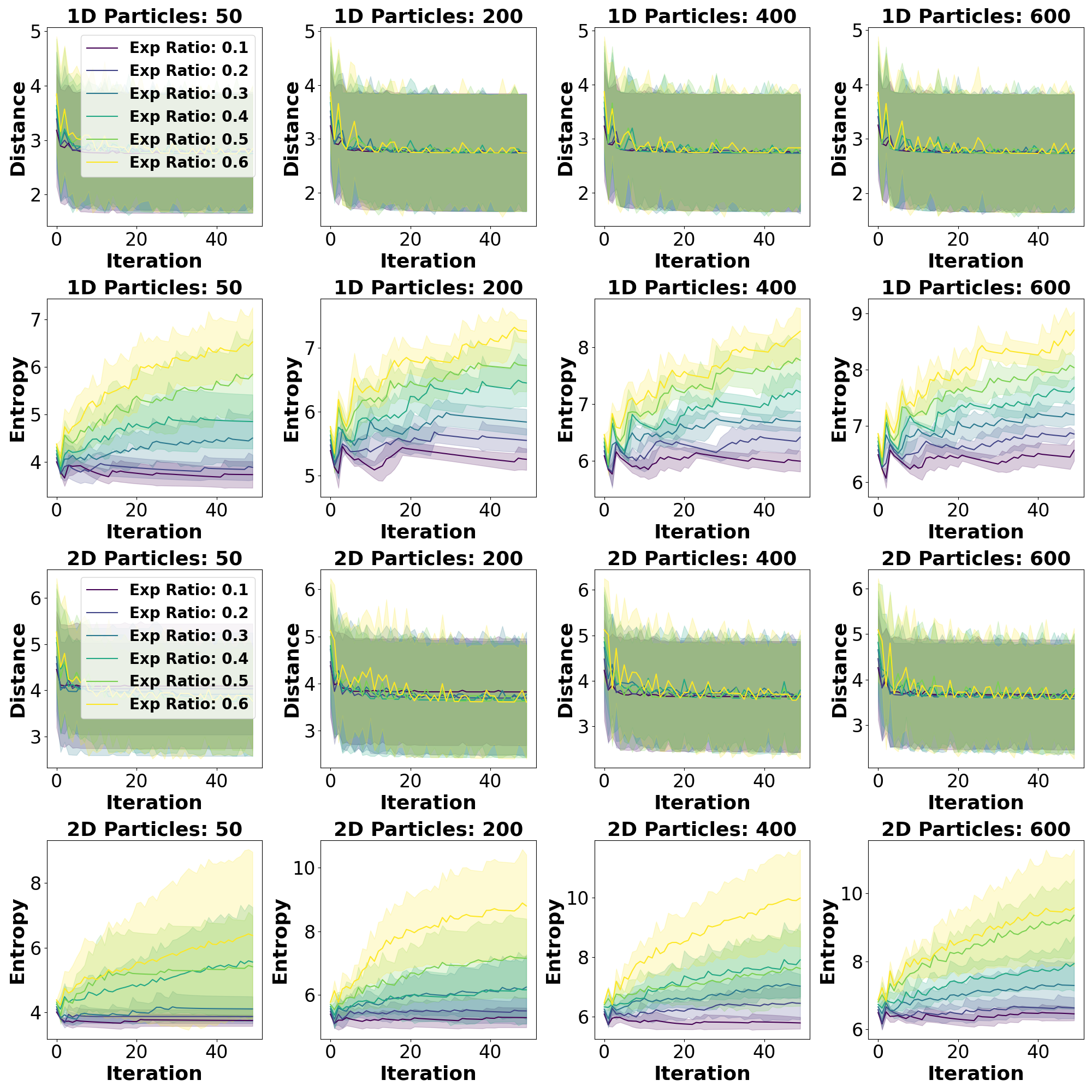}
\label{fig:Traditional_Particle_Filtering}}
\hfil
\subfigure[DEPF: Particle Numbers and Exploration Ratios]{\includegraphics[width=0.43\linewidth,height=0.31\textheight]{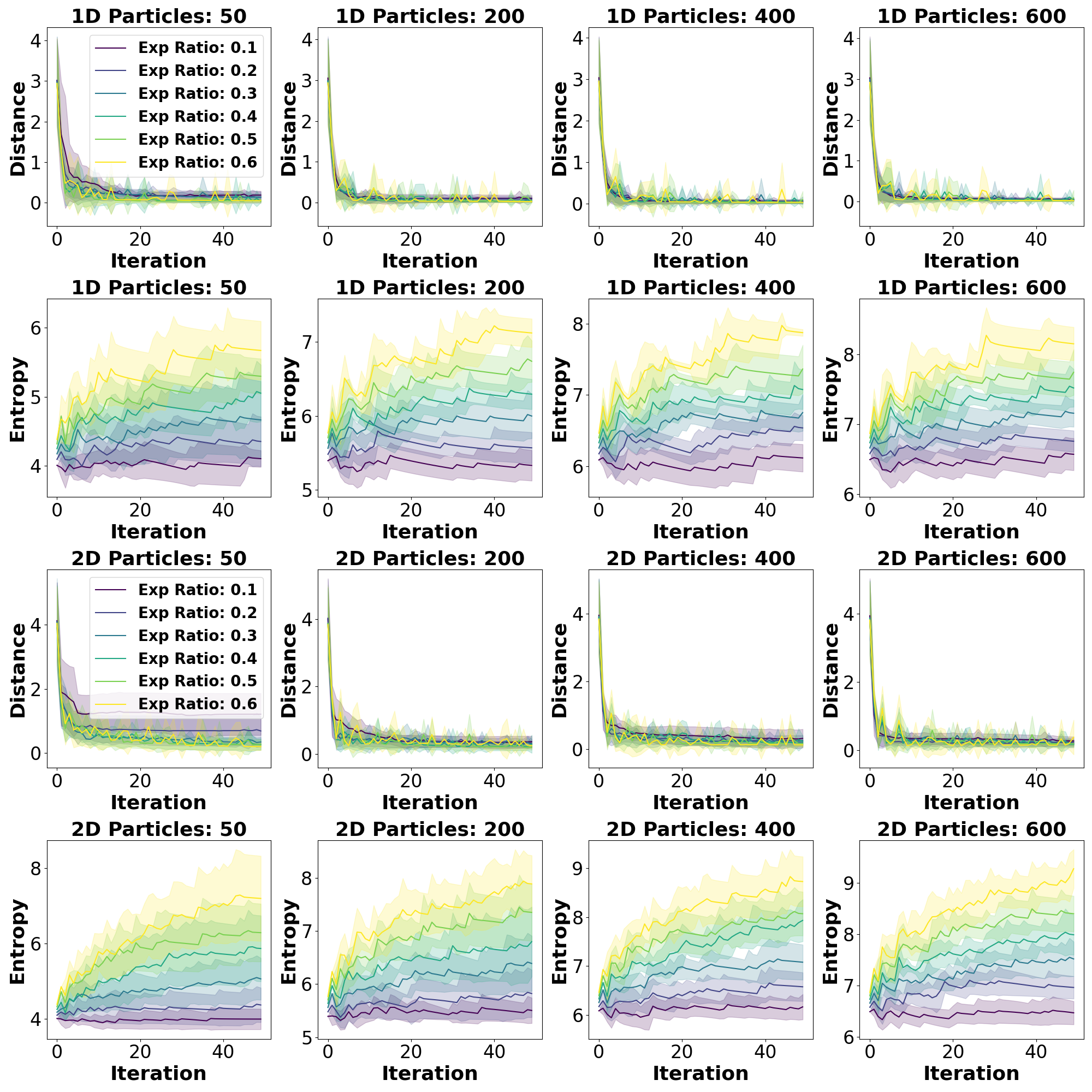}
  \label{fig:Diffusion-Enhanced Particle Filtering}}
\caption{Performance Results for 1D and 2D Scenarios}
\label{fig:PerformanceResults}
\end{figure*}
\section{Experiments}
This study consists of two experimental phases designed to evaluate the effectiveness of the proposed innovative  particle filtering method in addressing the \textit{Prior Boundary Phenomenon}.
\subsection{The Limitations of  Traditional Particle Filtering}

The first phase focuses on a simulated environment, where the initial particle distribution is confined within a restricted range, while the target state lies outside this boundary. The experiment systematically varies the number of particles, exploration ratios (ER), and dimensionality to assess the method's ability to expand the support range. \textbf{ER} is the proportion of particles added outside the prior to improve state estimation, adjustable for better exploration. It also compares the efficacy of Diffusion-Enhanced Particle Filtering (DEPF) with traditional particle filtering (TPF) techniques. \textbf{This phase primarily highlights the limitations of TPF methods in breaking through prior boundaries.}

Two primary metrics are used to evaluate the performance of the particle filters:
\textbf{Distance (D)} measures the Euclidean distance from the estimated state to the true goal. Lower \textbf{D}-values indicate more accurate state estimations, showing effective convergence. \textbf{Entropy (E)} calculates the entropy of the particle weights distribution. Higher \textbf{E}-values suggest a uniform weight distribution across particles, indicating effective exploration of the state space. Low \textbf{E}-values point to potential weight degeneracy, impacting filter performance. The \textit{Priori Scope} represents the ratio of the prior region to the entire state space, while the \textit{Exploration Ratio} denotes the proportion of the total particles specifically allocated for exploration. These concepts are unique to DEPF and do not apply to TPF, but are included here to facilitate direct comparison with DEPF in corresponding scenarios.
The results for 1-D and 2-D scenarios are presented in the main body of the text, as shown in Fig. \ref{fig:Traditional_Particle_Filtering} and Fig. \ref{fig:Diffusion-Enhanced Particle Filtering}. Results for scenarios extending from 3-D to 7-D are detailed in the Appendix~\ref{seb:Lab1}.

\textbf{Analysis of the Distance:}  
Across all 1-D to 7-D scenarios, DEPF consistently demonstrates superior performance over TPF in terms of the distance metric, achieving significantly lower final distance means. This indicates that DEPF provides more accurate state estimations, bringing the estimated positions closer to the true target. For instance, in the 1D scenario with 400 particles and an exploration ratio of 0.3, DEPF achieves a final distance mean of $0.07$, compared to $2.73$ for TPF. Similarly, in the 2-D scenario with 600 particles and an exploration ratio of 0.3, DEPF reduces the distance mean to $0.19$, while TPF remains at $3.58$. These results highlight DEPF's ability to overcome boundary constraints, particularly in cases where the target state lies beyond the initial particle distribution, effectively expanding the state-space exploration range.

\textbf{Analysis of the Entropy:}
In terms of the entropy metric, TPF generally exhibits higher entropy values compared to DEPF, primarily due to its confinement within boundary constraints. For example, in the 2-D scenario with 400 particles and an exploration ratio of 0.4, TPF records an entropy mean of $7.91$, while DEPF achieves $7.99$. The higher entropy observed in TPF reflects a uniform distribution of particle weights within the constrained boundaries (see Fig. \ref{fig:1Dand2D}), rather than meaningful exploration of the state space. In contrast, DEPF utilises exploration particles to extend beyond these boundaries, allowing particle weights to concentrate more effectively around regions of higher likelihood. This results in slightly lower entropy values, which indicate focused exploration and successful adaptation to dynamic environments. DEPF's ability to balance exploration and particle weight concentration underscores its efficacy in scenarios requiring robust boundary expansion.

\subsection{DEPF across Divergent Posterior Distributions}

The second phase involves a realistic scenario simulation based on a mobile robot tasked with locating an atmospheric release source. Particles are initialised using diverse prior distributions, including Uniform, Beta, Gaussian, Dirichlet, and four non-convex prior distributions, to represent various prior assumptions in real-world applications. 

The source is randomly positioned within the region of [10, 10] to [15, 15], while the agent starts from the region of [0, 0] to [5, 5]. We conducted 100 experiments and selected one instance to showcase the initial and final belief distributions of both TPF and DEPF. It was observed that TPF consistently fails to extend the belief beyond the prior boundary from Fig. \ref{fig:TPFIP} and \ref{fig:DEPFIP}.  

This study introduces two additional metrics: (1) the success rate (SR), and (2) the number of steps required for successful estimation. \textbf{The primary focus of this subsection is to demonstrate that DEPF can successfully estimate the target state across varying initial distributions, effectively overcoming the limitations imposed by the prior boundary.}

\begin{figure}[htp]
\centering
\subfigure[Initial prior of TPF]{
  \includegraphics[width=0.47\linewidth]{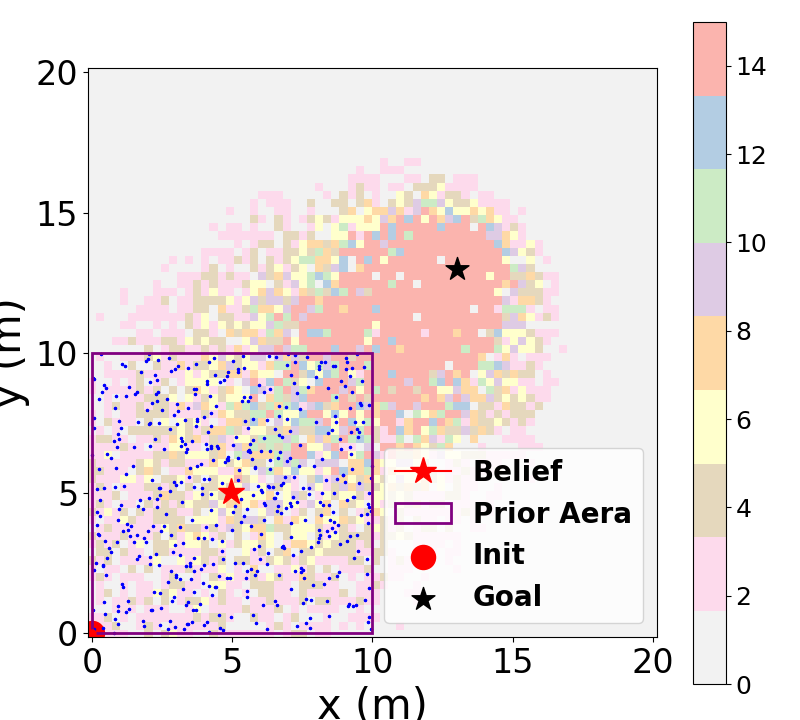}
  \label{fig:TPFIP}}
\hfil
\subfigure[Final distribution of TPF]{
  \includegraphics[width=0.47\linewidth]{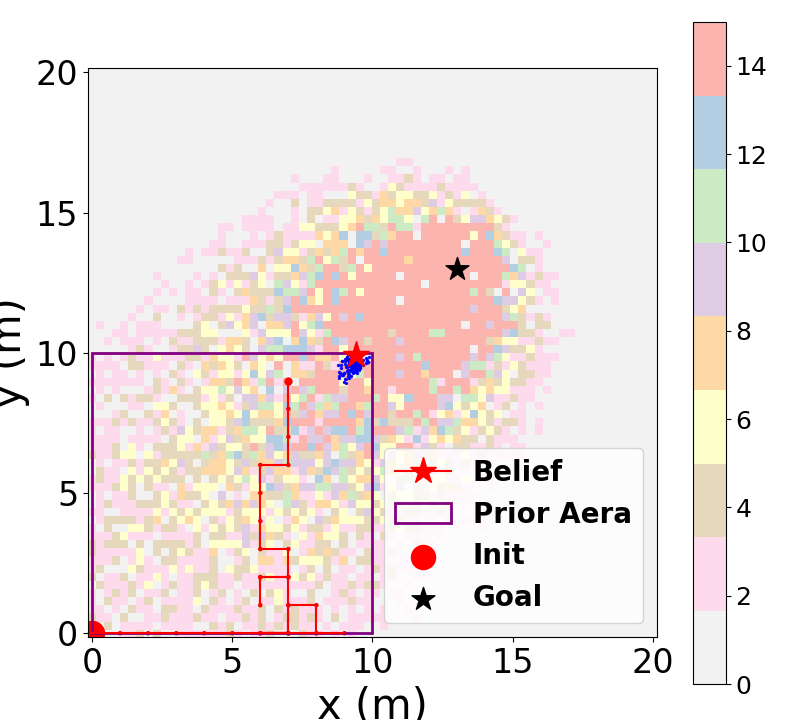}
  \label{fig:TPFFD}}
\hfil
\subfigure[Initial prior of DEPF]{
  \includegraphics[width=0.47\linewidth]{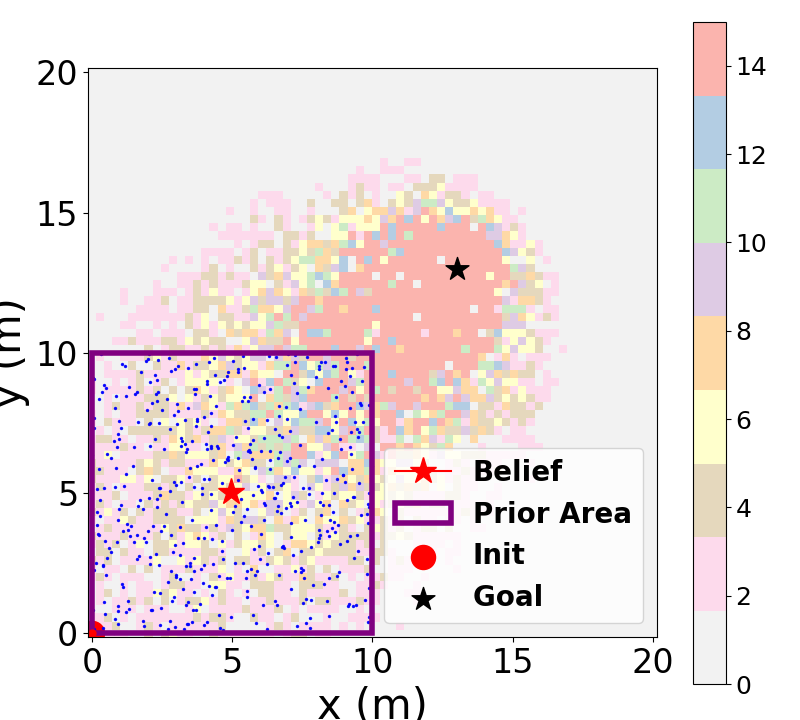}
  \label{fig:DEPFIP}}
\hfil
\subfigure[Final distribution of DEPF]{
  \includegraphics[width=0.47\linewidth]{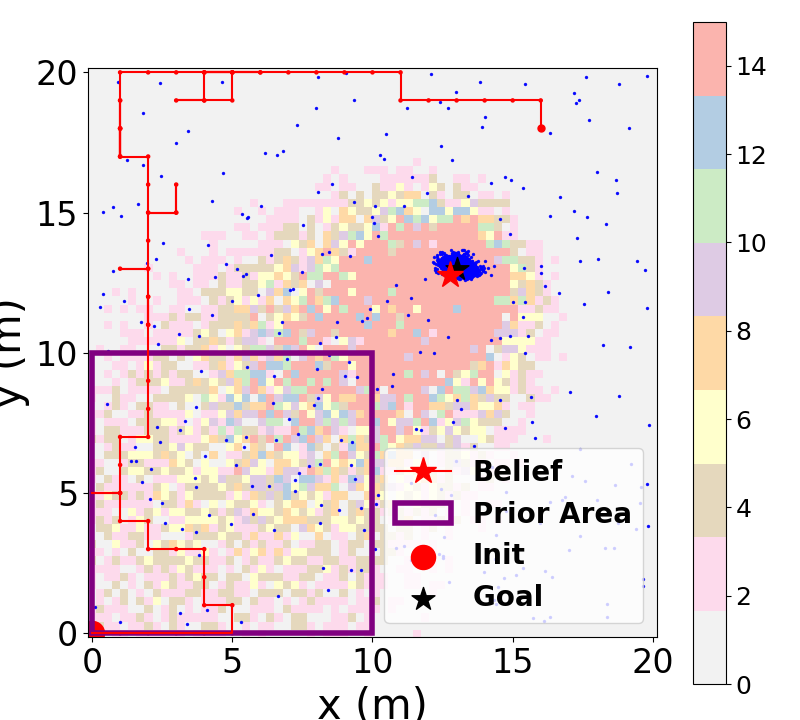}
  \label{fig:DEPFFD}}
\caption{TPF and DEPF in Scenario 2 }
\label{fig_PF_Visualisation}
\end{figure}

The experimental results presented in Table \ref{tab:0.3} focus on a specific case where the priori scope is $0.3$ and the exploration ratio is $0.3$. While the experiments were conducted across a range of ratios \([0.3, 0.4, 0.5, 0.6, 0.7, 0.8]\) and exploration ratios \([0.1, 0.2, 0.3, 0.4, 0.5, 0.6]\), only the results for this particular setting are provided here for clarity. The complete results for all combinations of ratios and exploration ratios are detailed in the Appendix \ref{seb:Lab2}. 

\begin{table}[h]
\centering
\caption{Experimental Results for Priori Scope = 0.3 at Ratio 0.3}
\label{tab:0.3}
{\footnotesize 
\begin{tabular}{|c|c|c|c|c|c|}
\hline
\textbf{Prior} & \textbf{SR} & \textbf{Entropy} & \textbf{Distance} & \textbf{Avg. Step} \\ \hline
Uniform & 0.81 & 6.79 $\pm$ 0.34 & 0.60 $\pm$ 0.62 & 67.29 \\ \hline
Beta & 0.81 & 6.87 $\pm$ 0.41 & 0.42 $\pm$ 0.32 & 67.36 \\ \hline
Gaussian & 0.87 & 7.17 $\pm$ 0.34 & 0.22 $\pm$ 0.05 & 71.38 \\ \hline
Dirichlet & 0.72 & 6.42 $\pm$ 0.30 & 0.37 $\pm$ 0.38 & 80.03 \\ \hline
Star & 0.84 & 6.82 $\pm$ 0.46 & 0.37 $\pm$ 0.15 & 70.05 \\ \hline
 ${1}/{4}$ Ring & 0.82 & 6.93 $\pm$ 0.35 & 0.39 $\pm$ 0.28 & 67.76 \\ \hline
${1}/{2}$ Ring & 0.83 & 7.13 $\pm$ 0.45 & 0.39 $\pm$ 0.21 & 69.17 \\ \hline
${3}/{4}$ Ring & 0.84 & 7.10 $\pm$ 0.31 & 0.44 $\pm$ 0.42 & 68.18 \\ \hline
\end{tabular}}
\end{table}

The results demonstrate that our DEPF method effectively mitigates boundary effects across a variety of prior distributions, including both common and non-convex priors, achieving high success rates and accurate state estimations even under challenging initial conditions. Among the common priors (Uniform, Beta, Gaussian, and Dirichlet), the Gaussian prior exhibits the best performance with the highest success rate (0.87) and lowest mean distance (0.22), indicating its ability to adapt effectively to boundary constraints. In contrast, the Dirichlet prior shows the weakest performance with the lowest success rate (0.72) and entropy (6.42), highlighting its limitations in exploring beyond the constrained boundaries.

For the non-convex priors (1/4 Ring, 1/2 Ring, 3/4 Ring, and Star), DEPF demonstrates its robustness by achieving consistent success across these complex prior shapes. The fractional ring priors, particularly the 1/2 Ring and 3/4 Ring, achieve high success rates (0.83 and 0.84, respectively) and competitive entropy values, showcasing DEPF's capability to effectively navigate irregular and asymmetric state spaces. These results emphasise DEPF's versatility and strength in addressing both common and non-convex priors, overcoming boundary limitations, and ensuring accurate and efficient state estimation across diverse initial distributions.

\section{Conclusion}
This study introduces the Prior Boundary Phenomenon as a critical limitation in particle filtering and proposes the Diffusion-Enhanced Particle Filtering (DEPF) Framework to address this challenge. By integrating exploratory particles, entropy-driven regularisation, and kernel-induced stochastic perturbations, DEPF systematically expands the particle support range and ensures robust estimation for target states beyond the prior's support. Comprehensive experiments validate DEPF's superior performance across diverse scenarios, highlighting its potential to extend particle filtering's applicability to complex and dynamic environments.

\clearpage
\bibliography{example_paper}
\bibliographystyle{icml2025}

\newpage

\section{Support Range}
Support Range is a term used in mathematics and statistics, typically to describe the range of non-zero or effective values of a function, distribution, or dataset. Specifically, its meaning depends on the context as follows:
\subsection{Support Range in Mathematical Analysis}
In mathematics, the \textit{support} of a function refers to the set of points where the function is non-zero.\\

Example:  
If a function \( f(x) \) is zero everywhere except on the interval \([a, b]\), its support range is \([a, b]\).

\[
\text{Support}(f) = \{ x \in \mathbb{R} : f(x) \neq 0 \}
\]

\subsection{Support Range in Statistics}
\textit{Support Range in Statistics}
In statistics, the support range describes the range or set of possible values a random variable can take.

\textit{a. Discrete Random Variable:}  
The support range is the set of all possible values.  
\textit{Example:} The result of rolling a die is:  
\[
\text{Support} = \{1, 2, 3, 4, 5, 6\}
\]

\textit{b. Continuous Random Variable:} 
The support range is the interval where the probability density function (PDF) is non-zero.  \\
\textit{Example:} For a standard normal distribution, the theoretical support is the entire real number line \(\mathbb{R}\), but the majority of the probability is concentrated within \([-3, 3]\).

\subsection{Support Range in Physics and Engineering}
In physics and signal processing, the support range describes the time or spatial region where a signal or waveform has non-zero values.\\  
\textit{Example:} A signal might exist only in the time range:  
\[
t \in [0, 10]
\]  
and is zero elsewhere.
\newpage
\section{Proof- Proposition: Recursive Confinement of Support Range}
\label{seb:ProofProposition}

\textbf{Proof and Statement of the Prior Boundary Phenomenon}

Given the definitions of the prior boundary \(\mathcal{S}_{\text{prior}}\) and the support range \(\mathcal{S}_t\), we formally prove that the recursive nature of particle filtering confines the support range within the prior boundary, thereby limiting its ability to explore regions outside \(\mathcal{S}_{\text{prior}}\).

\textbf{Proposition: Recursive Confinement of Support Range}

The support range \(\mathcal{S}_t\) at time \(t\) satisfies the recursive relationship:
\[
\mathcal{S}_t \subseteq \mathcal{S}_{t-1} \subseteq \cdots \subseteq \mathcal{S}_{\text{prior}}, \quad \forall t \geq 0,
\]
with the base case:
\[
\mathcal{S}_0 = \mathcal{S}_{\text{prior}}.
\]

Thus, for any time \(t \geq 0\), the support range is strictly confined by the prior boundary:
\[
\mathcal{S}_t \subseteq \mathcal{S}_{\text{prior}}.
\]

\textbf{Key Definitions}\\
1. \textit{Prior Boundary}: The initial support range is defined as:
   \[
   \mathcal{S}_{\text{prior}} = \{x_0 \in \mathcal{X} : P(x_0) > 0\},
   \]
   where \(P(x_0)\) is the prior distribution at \(t=0\), and \(\mathcal{X}\) denotes the entire state space.

2. \textit{Support Range}: At time \(t\), the support range \(\mathcal{S}_t\) is defined as the region of the state space where the particle filter has non-zero probability, i.e.,
   \[
   \mathcal{S}_t = \bigcup_{i=1}^N \{x_t \in \mathcal{X} : P(x_t | x_{t-1}^{(i)}) > 0\},
   \]
   where \(\{x_{t-1}^{(i)}\}_{i=1}^N\) are particles from \(\mathcal{S}_{t-1}\).

\textbf{Proof: Recursive Confinement of Support Range}
\begin{proof}
We now formally prove the proposition using induction, explicitly incorporating the three key steps of particle filtering: \textit{initialisation}, \textit{particle propagation}, and \textit{resampling and weight updates}.

\textbf{Step 1: Base Case at \(t = 0\)}

At \(t = 0\), particles are sampled from the prior distribution \(P(x_0)\). By definition:
\[
\mathcal{S}_0 = \mathcal{S}_{\text{prior}} = \{x_0 \in \mathcal{X} : P(x_0) > 0\}.
\]
This establishes the initial support range as \(\mathcal{S}_{\text{prior}}\), and we have:
\[
\mathcal{S}_0 \subseteq \mathcal{S}_{\text{prior}}.
\]

\textbf{Step 2: Inductive Hypothesis}

Assume that at time \(t-1\), the support range satisfies:
\[
\mathcal{S}_{t-1} \subseteq \mathcal{S}_{\text{prior}}.
\]

We will show that this implies:
\[
\mathcal{S}_t \subseteq \mathcal{S}_{\text{prior}}.
\]

\textbf{Step 3: Particle Propagation}

At time \(t\), particles \(\{x_t^{(i)}\}_{i=1}^N\) are generated by propagating particles from \(\mathcal{S}_{t-1}\) according to the state transition model \(P(x_t | x_{t-1})\). Formally:
\[
x_t^{(i)} \sim P(x_t | x_{t-1}^{(i)}), \quad x_{t-1}^{(i)} \in \mathcal{S}_{t-1}.
\]

Since \(x_{t-1}^{(i)} \in \mathcal{S}_{t-1}\), and by the inductive hypothesis \(\mathcal{S}_{t-1} \subseteq \mathcal{S}_{\text{prior}}\), we know:
\[
P(x_{t-1}^{(i)}) = 0 \quad \text{for } x_{t-1}^{(i)} \notin \mathcal{S}_{\text{prior}}.
\]

Furthermore, the state transition model \(P(x_t | x_{t-1})\) satisfies:
\[
P(x_t | x_{t-1}) = 0 \quad \text{if } x_{t-1} \notin \mathcal{S}_{t-1}.
\]

Thus, the propagated particles \(\{x_t^{(i)}\}\) must satisfy:
\[
x_t^{(i)} \in \mathcal{S}_{t-1}, \quad \forall i.
\]

Therefore:
\[
\mathcal{S}_t \subseteq \mathcal{S}_{t-1}.
\]

\textbf{Step 4: Weight Update and Resampling}

After particle propagation, the weights of the particles are updated based on the observation model \(P(y_t | x_t)\). Specifically:
\[
w_t^{(i)} \propto w_{t-1}^{(i)} P(y_t | x_t^{(i)}).
\]

Since \(P(y_t | x_t) = 0\) for \(x_t \notin \mathcal{S}_{t-1}\), the weights of particles outside \(\mathcal{S}_{t-1}\) are zero. During resampling, particles are resampled proportional to their weights, ensuring that:
\[
x_t^{(i)} \in \mathcal{S}_{t-1}, \quad \forall i.
\]

Thus, after resampling:
\[
\mathcal{S}_t \subseteq \mathcal{S}_{t-1}.
\]

\textbf{Step 5: Inductive Conclusion}

Combining the results of Steps 3 and 4, we have shown that:
\[
\mathcal{S}_t \subseteq \mathcal{S}_{t-1}.
\]

Using the inductive hypothesis \(\mathcal{S}_{t-1} \subseteq \mathcal{S}_{\text{prior}}\), we conclude:
\[
\mathcal{S}_t \subseteq \mathcal{S}_{\text{prior}}, \quad \forall t \geq 0.
\]

\textbf{Conclusion: Prior Boundary Phenomenon}

The proof demonstrates that the recursive nature of particle filtering enforces a strict confinement of the support range \(\mathcal{S}_t\) within the prior boundary \(\mathcal{S}_{\text{prior}}\). Specifically, if a target state \(x_g\) satisfies:
\[
x_g \notin \mathcal{S}_{\text{prior}},
\]
then \(x_g \notin \mathcal{S}_t\) for all \(t \geq 0\). 

This phenomenon, termed the \textit{Prior Boundary Phenomenon}, highlights a fundamental limitation of particle filtering: the prior distribution completely determines the accessible regions of the state space, and regions outside the prior boundary remain unreachable.

By definition, the support range $\mathcal{S}_t$ is given as:
\[
\mathcal{S}_t = \bigcup_{i=1}^N \{x_t \in \mathcal{X} : P(x_t | x_{t-1}^{(i)}) > 0\}, \quad \text{where } x_{t-1}^{(i)} \in \mathcal{S}_{t-1}.
\]
For the base case at $t = 0$, we have:
\[
\mathcal{S}_0 = \mathcal{S}_{\text{prior}} = \{x_0 \in \mathcal{X} : P(x_0) > 0\}.
\]
By induction, assume $\mathcal{S}_{t-1} \subseteq \mathcal{S}_{\text{prior}}$. Since particle propagation at time $t$ depends entirely on $x_{t-1}^{(i)} \in \mathcal{S}_{t-1}$ and the state transition model $P(x_t | x_{t-1})$, which cannot generate particles outside $\mathcal{S}_{t-1}$, it follows that:
\[
\mathcal{S}_t \subseteq \mathcal{S}_{t-1}.
\]
Combining this with the inductive hypothesis $\mathcal{S}_{t-1} \subseteq \mathcal{S}_{\text{prior}}$, we conclude:
\[
\mathcal{S}_t \subseteq \mathcal{S}_{\text{prior}}, \quad \forall t \geq 0.
\]
\end{proof}

\newpage
\section{Proof}
\subsection{Diffusion-Driven Support Range Expansion}

The Prior Boundary Phenomenon arises due to the recursive confinement of the particle support range \( \mathcal{S}_t \) within the initial prior boundary \( \mathcal{S}_{\text{prior}} \). If a target state \( x_g \notin \mathcal{S}_{\text{prior}} \), no particles can be generated near \( x_g \), resulting in the failure of state estimation. To address this, we propose a diffusion-driven particle filtering framework that incorporates dynamic exploration, entropy-driven diffusion regularisation, and kernel-based stochastic perturbations to expand the effective support range.

The first enhancement introduces exploratory particles to expand the effective support range beyond \( \mathcal{S}_{\text{prior}} \). At each time step, a subset of particles is sampled from a uniform distribution \( \mathcal{U}(\mathcal{B}) \) over an extended state space:
\[
x_t^{(j)} \sim \mathcal{U}(\mathcal{B}), \quad j \in \mathcal{E},
\]
where \( \mathcal{B} \) is a bounding box defining the extended state space, and \( \mathcal{E} \) represents the indices of exploratory particles. The weights of these exploratory particles are initialised as:
\[
w_t^{(j)} = \frac{\epsilon}{|\mathcal{E}|}, \quad \epsilon \ll 1.
\]
This ensures that while the primary particle set continues to approximate the posterior within the current support range, exploratory particles can sample regions beyond \( \mathcal{S}_{\text{prior}} \), increasing the likelihood of covering previously unreachable states.

To preserve diversity in the particle set, entropy regularisation is applied during the weight update step. This prevents particles with smaller weights from being prematurely eliminated, ensuring that even low-probability regions of the state space are adequately explored. The weights are updated with an entropy term as:
\[
w_t^{(i)} = w_t^{(i)} + \beta H,
\]
where \( H \) is the entropy of the weight distribution, defined as:
\[
H = -\sum_{i=1}^N w_t^{(i)} \log(w_t^{(i)} + \epsilon).
\]
By penalising overly concentrated weight distributions, this mechanism maintains a more balanced representation of the posterior, mitigating the degeneracy problem common in particle filters.

Kernel-based perturbations are then introduced to further expand the particle support range dynamically. Each particle \( x_t^{(i)} \) is perturbed using a Gaussian kernel to simulate stochastic exploration of the local neighbourhood. The perturbation is generated as:
\[
\Delta x_t^{(i)} \sim h_{\text{opt}} \cdot \mathcal{L} \cdot \mathcal{N}(0, I),
\]
where \( \mathcal{L} \) is the lower triangular matrix obtained from the Cholesky decomposition of the covariance matrix \( \Sigma \), such that: \( \Sigma = \mathcal{L} \mathcal{L}^T.\) And \( h_{\text{opt}} \) is the kernel bandwidth, dynamically adjusted based on the particle count \( N \) and the dimensionality \( n \) of the state space. The optimal bandwidth is calculated as:
\[
h_{\text{opt}} = A \cdot N^{-\frac{1}{n+4}}, \quad A = \left(\frac{4}{n+2}\right)^{\frac{1}{n+4}}.
\]

This ensures that the perturbation scale adapts to the number of particles and the complexity of the state space. Larger \( N \) leads to finer perturbations, allowing for precise adjustments, while smaller \( N \) increases the perturbation scale to improve exploration.

The covariance matrix \( \Sigma \) required for perturbation is computed dynamically from the particle set:
\[
\Sigma = \sum_{i=1}^N w_t^{(i)} (x_t^{(i)} - \mu)(x_t^{(i)} - \mu)^T + \lambda I,
\]
where \( \mu \) is the weighted mean of the particles:
\[
\mu = \sum_{i=1}^N w_t^{(i)} x_t^{(i)},
\]
and \( \lambda > 0 \) is a regularisation term that ensures positive definiteness of \( \Sigma \). Each particle is then updated as:
\[
x_t^{(i)} \leftarrow x_t^{(i)} + \Delta x_t^{(i)}.
\]

Finally, an MCMC acceptance criterion ensures that the perturbed particles remain consistent with the target posterior distribution. The acceptance probability for each perturbed particle \( x_t^{(i)} \) is calculated as:
\[
\alpha_i = \frac{w_{\text{new}}^{(i)}}{w_{\text{old}}^{(i)}} \cdot \exp\left(-\frac{1}{2} \Delta x_t^{(i)^T} \Sigma^{-1} \Delta x_t^{(i)}\right).
\]
A random uniform variable \( u_i \sim \mathcal{U}(0, 1) \) determines whether the particle is accepted:
\[
x_t^{(i)} =
\begin{cases} 
x_t^{(i)}, & \text{if } \alpha_i \geq u_i, \\
x_t^{(i)} - \Delta x_t^{(i)}, & \text{otherwise}.
\end{cases}
\]

By integrating exploratory particles, entropy regularisation, and kernel-based perturbations, the proposed framework overcomes the Prior Boundary Phenomenon. Exploratory particles extend the effective support range, entropy regularisation ensures weight diversity, and kernel perturbations enable dynamic adaptation to local structures. Together, these enhancements ensure that target states \( x_g \) outside \( \mathcal{S}_{\text{prior}} \) can be reached, allowing for robust state estimation in previously inaccessible regions.

\newpage

\newpage

\subsection{Proof of Support Range Expansion Beyond the Prior Boundary}

Given the enhancements of adaptive exploration, entropy regularisation, and kernel-based perturbations, we prove that the recursive nature of the proposed framework allows the support range \( \mathcal{S}_t \) to expand beyond the prior boundary \( \mathcal{S}_{\text{prior}} \), enabling the particle filter to explore regions where \( x_g \notin \mathcal{S}_{\text{prior}} \).

\begin{proposition}[Expansion of Support Range]
With the proposed enhancements, the support range \( \mathcal{S}_t \) satisfies the recursive relationship:
\[
\mathcal{S}_t^{\text{new}} = \mathcal{S}_t \cup \mathcal{B}, \quad \mathcal{S}_{t+1} = \mathcal{S}_t^{\text{new}} \oplus h_{\text{opt}},
\]
where \( \mathcal{B} \) is the extended bounding box sampled by exploratory particles, and \( \oplus h_{\text{opt}} \) represents the kernel-induced expansion. Starting from the initial prior boundary:
\[
\mathcal{S}_0 = \mathcal{S}_{\text{prior}},
\]
the recursive updates ensure that for any target state \( x_g \in \mathcal{B} \), there exists a time step \( t \) such that:
\[
x_g \in \mathcal{S}_t.
\]
\end{proposition}

\begin{proof}
The proposed enhancements modify the recursive particle filtering process in three key ways:

\textit{1.Exploratory Particles:}
At each time step \( t \), a subset of exploratory particles is sampled from a uniform distribution \( \mathcal{U}(\mathcal{B}) \), where \( \mathcal{B} \) represents an extended bounding box of the state space. These exploratory particles contribute to an expanded support range:
   \[
   \mathcal{S}_t^{\text{new}} = \mathcal{S}_t \cup \mathcal{B}.
   \]
   Since \( \mathcal{B} \) is designed to include regions outside \( \mathcal{S}_{\text{prior}} \), the effective support range \( \mathcal{S}_t^{\text{new}} \) can cover states \( x_g \notin \mathcal{S}_{\text{prior}} \).

\textit{2.Entropy Regularisation:}
The addition of an entropy term \( H \) during the weight update step ensures that particles with smaller weights are not prematurely eliminated:
   \[
   w_t^{(i)} = w_t^{(i)} + \beta H, \quad H = -\sum_{i=1}^N w_t^{(i)} \log(w_t^{(i)} + \epsilon).
   \]
This mechanism maintains particle diversity and prevents over-concentration in regions confined by \( \mathcal{S}_{\text{prior}} \). Consequently, exploratory particles retain sufficient influence to contribute to the posterior distribution, ensuring \( \mathcal{S}_t^{\text{new}} \) remains expanded.

\textit{3.Kernel-Based Perturbations:}
Each particle \( x_t^{(i)} \) is perturbed using a Gaussian kernel, which enables stochastic exploration of local neighbourhoods. The perturbation is generated as:
   \[
   \Delta x_t^{(i)} \sim h_{\text{opt}} \cdot \mathcal{L} \cdot \mathcal{N}(0, I),
   \]
   where \( h_{\text{opt}} = A \cdot N^{-\frac{1}{n+4}} \) dynamically adjusts the perturbation scale, and \( \mathcal{L} \) is derived from the Cholesky decomposition of the covariance matrix \( \Sigma \) (\( \Sigma = \mathcal{L} \mathcal{L}^T \)). This introduces an additional expansion to the support range:
   \[
   \mathcal{S}_{t+1} = \mathcal{S}_t^{\text{new}} \oplus h_{\text{opt}},
   \]
   where \( \oplus h_{\text{opt}} \) represents the kernel-induced expansion. Over multiple iterations, this perturbation ensures that \( \mathcal{S}_t \) can stochastically expand to include \( x_g \).

By combining these mechanisms, the recursive relationship for the support range becomes:
\[
\mathcal{S}_{t+1} = (\mathcal{S}_t \cup \mathcal{B}) \oplus h_{\text{opt}}.
\]
Starting from the initial prior boundary \( \mathcal{S}_0 = \mathcal{S}_{\text{prior}} \), it follows that for any target state \( x_g \in \mathcal{B} \), there exists a time step \( t \) such that:
\[
x_g \in \mathcal{S}_t.
\]
Thus, the support range is no longer confined by \( \mathcal{S}_{\text{prior}} \), and the particle filter can estimate states outside the prior boundary.

\end{proof}

\newpage
\section{Extended Related Work}


\subsection{Particle Filtering}
Particle filtering (PF) \cite{isard1996contour} is initially designed for object tracking in cluttered environments, and has shown broad application in diverse domains like hydrology \cite{moradkhani2005uncertainty}, mobile robot \cite{fox2001particle,shi2024autonomous,shi2024reinforcement}, geophysics \cite{van2009particle}, etc.. Technically, particle filtering uses Bayesian inference to estimate the dynamic states within the system \cite{soto2005self}: First initializes a series of particles randomly based on prior knowledge. Then iteratively refines these particles through prediction, update, normalization, and resampling steps. The final goal is to expect these particles to converge to approximate the system's posterior distribution, which can be used for system state estimation.

The efficiency and accuracy of the particle filter are generally discussed and optimized in iteration steps, such as comparing different resampling methods \cite{douc2005comparison, li2015resampling}, making the number of particles sampled adaptive \cite{fox2001kld, soto2005self}, among others. However, these optimizations assume that the target state aligns with the prior's support. Cases where the target state lies beyond the prior boundaries are an underexplored area in PF.

\subsection{Bayesian Inference}
The broader Bayesian inference literature provides more insights into prior distribution discussions.
\subsubsection{Prior Selections in Bayesian Inference}

A prior is a probability distribution of a parameter before observing data. 
Informative priors can be difficult to obtain, and an inappropriate prior can lead to misleading Bayesian inferences, thus making prior selection important and challenging \cite{berger1990robust, richardson1997some}. Discussions in prior selection include choosing an appropriate distribution and defining its support range.

For prior type selections, \cite{winkler1967assessment} examines methods for assessing priors in Bayesian analysis, aiming to quantify prior knowledge into a probability distribution.
\cite{gelman2017prior} proposes an improvement of integrating prior selection into the entire Bayesian workflow.

For challenges arising from prior boundary constraints, an intuitive solution is to apply unbounded distributions as priors, allowing the shape to maximize coverage of the target range. While the normal distribution is a common unbounded distribution option, it performs unideal with non-normal data. The Johnson unbounded distribution \cite{johnson1995continuous} is proposed to address this limitation by enabling transformations that handle skewness and kurtosis to fit various distribution shapes. \cite{sadok2023non, hartigan1996locally} explores the use of unbounded uniform priors, which assigns equal weight to all possible values. 
The application of Bayesian inference with these distributions includes scenarios with small samples \cite{marhadi2012quantifying}, automatic threshold setting for distribution-uncertain wind turbine monitoring systems \cite{marhadi2014using}, etc..
However, using unbounded distributions as priors requires significantly more computing resources and time to calculate posterior probabilities \cite{doshi2009large}, and makes their application challenging in large-scale data scenarios.

\subsubsection{Out of Distribution in Bayesian Inference}

The regression and classification tasks in machine learning also discuss prior boundary constraints. Data points that exceed prior knowledge distributions are labeled as anomalies or out-of-distribution (OOD) samples. Models' generalization and detection ability are investigated in these OOD samples. 
NatPN \cite{charpentier2021natural} parameterizes conjugate prior distributions to enable efficient uncertainty estimation. Posterior network \cite{charpentier2020posterior} applies Normalizing Flows to estimate a precise posterior distribution for samples. \cite{wang2021bayesian} incorporated aleatoric uncertainty and outlier exposure into Bayesian OOD detection.

\newpage
\section{Phase 1 Experimental Results Figure and Table}
\label{seb:Lab1}

\subsection{Experiment Description: Monte Carlo Particle Filtering for Source Term Estimation} 

\textbf{Objective}  
The experiment aims to evaluate the effectiveness of an \textit{enhanced Monte Carlo Markov Chain Particle Filter (MCMC-PF)} for \textit{source term estimation} in different-dimensional search spaces. The primary goal is to assess how \textit{particle count} and \textit{exploration ratio} impact estimation accuracy and convergence in a probabilistic search setting.

\textbf{Experimental Setup}   \\
\begin{itemize}
    \item \textit{Scenarios} 
    \begin{itemize}
        \item The experiment is conducted in \textbf{seven different dimensions:}\\\textbf{1D, 2D, 3D, 4D, 5D, 6D, and 7D} search spaces.
        \item Each scenario defines a \textbf{bounded search region} where the source location must be estimated.
        \item A \textbf{target source location (goal)} is randomly generated within a pre-defined range in each scenario.
        
    \end{itemize}

    \item \textit{Particles and Exploration Strategy}
    \begin{itemize}
        \item The experiment uses a \textit{Monte Carlo Particle Filtering approach}, enhanced with:
        \\
        - \textit{Systematic Resampling}: Ensures particle diversity and reduces degeneracy.  \\
        - \textit{Exploration Particles}: A fraction of particles is randomly regenerated to encourage global exploration.  \\
        - \textit{ntropy Regularization}: Adjusts particle weights to maintain a balanced distribution. 
        \item The number of particles tested: 50, 200, 300, 400, 600, 700, 800, 900, 1000  
        \item The exploration ratios tested: 0.1, 0.2, 0.3, 0.4, 0.5, 0.6**
    \end{itemize}
\end{itemize}

\textbf{Algorithm Workflow}
\begin{enumerate}
\item \textbf{Initialization}
\begin{itemize}
\item \textbf{Particles} are initialized randomly in the search space.
\item \textbf{Weights} are assigned equally to all particles.
\item A \textbf{target goal} (true source location) is selected randomly within the predefined range.
\end{itemize}

\item \textbf{Particle Update (Iteration Loop for 50 Iterations per Run)}
\begin{itemize}
    \item \textbf{Likelihood Computation}: Each particle's probability is updated based on its distance from the target source location.
    \item \textbf{Weight Adjustment}: Weights are recomputed based on a p-norm distance function.
    \item \textbf{Resampling}: If the \textbf{Effective Sample Size (ESS)} falls below a threshold, systematic resampling is performed.
    \item \textbf{Exploration Boost}: New particles are introduced into the search space to prevent stagnation.
    \item \textbf{Entropy Regularization}: Particle weights are slightly perturbed to prevent over-concentration.
\end{itemize}

\item \textbf{Performance Metrics Collected}
\begin{itemize}
    \item \textbf{Distance to Goal}: Measures how accurately the estimated source converges to the true goal.
    \item \textbf{Entropy of Weights}: Tracks how well the particle distribution maintains diversity.
\end{itemize}

\end{enumerate}

\textbf{Results Collection \& Visualization}
\begin{enumerate}
\item \textbf{Performance Evaluation}
\begin{itemize}
\item For each scenario and each combination of \textbf{particle count} and \textbf{exploration ratio}, the algorithm runs \textbf{10 trials}.
\item The final results include:
\begin{itemize}
\item \textbf{Average final distance to the goal}
\item \textbf{Standard deviation of final distance}
\item \textbf{Average final entropy}
\item \textbf{Standard deviation of entropy}
\end{itemize}
\end{itemize}

\item \textbf{Visualization}
\begin{itemize}
    \item \textbf{Line plots} depict the convergence behavior over iterations for different parameter configurations.
    \item \textbf{Shaded areas} represent the variance in results across trials.
    \item \textbf{Tables summarize} the numerical results for different configurations.
\end{itemize}

\end{enumerate}

This experiment explores how different \textit{particle filter configurations} affect the performance of \textit{source term estimation} across varying dimensions. By incorporating \textit{adaptive resampling, exploration strategies, and entropy regularization}, the algorithm demonstrates \textit{robust estimation accuracy and adaptability} in complex, high-dimensional search spaces.

\begin{table}[htbp]
\centering
\caption{
Experimental Results of Diffusion-Enhanced Particle Filtering for 1D and 2D Scenarios: Particle Numbers and Exploration Ratios
}
\label{tab:1D_2D_results_DEPF}
{\scriptsize  

}
\end{table}

\begin{figure}[htbp]
\centering
\includegraphics[width=0.8\linewidth]{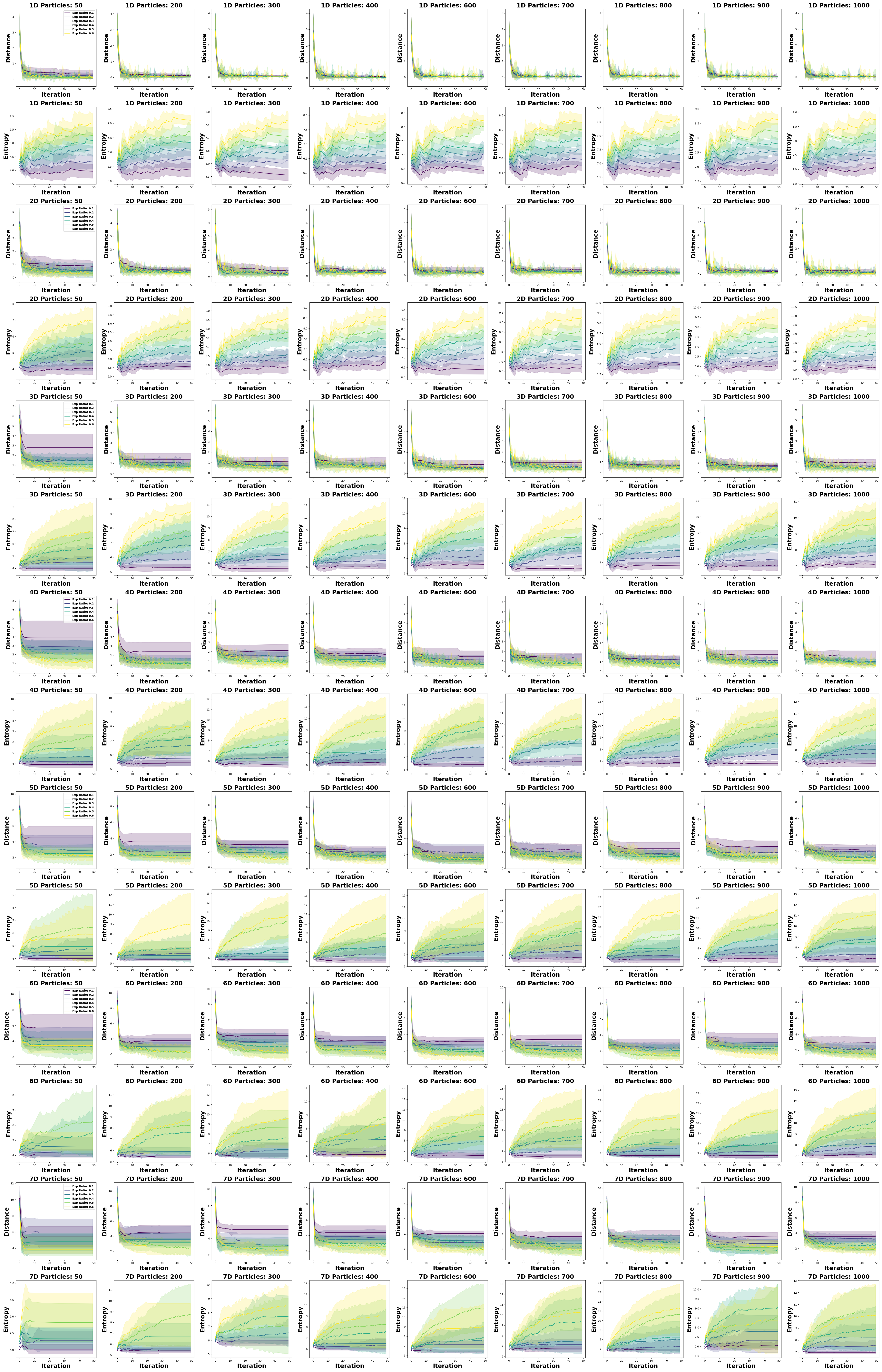}
\caption{Performance Across 1D to 7D Scenarios: Impact of Particle Numbers and Exploration Ratios.}
\label{fig_1-7D}
\end{figure}

\newpage
\section{Phase 2 Experimental Results  Table}
\label{seb:Lab2}

\subsection{Problem Statement}
\textbf{The Source Term Estimation} (STE) problem involves determining the location, release rate, and temporal characteristics of an unknown hazardous emission, such as chemical, biological, or radiological (CBR) pollutants. Due to the \textit{complex and uncertain nature} of atmospheric dispersion—affected by factors like \textit{wind speed, direction, and turbulence}—STE presents a \textit{highly nonlinear and ill-posed} challenge. Traditional methods rely on \textit{static sensor networks and inverse modelling}, but modern approaches integrate \textit{mobile robots, adaptive path planning, Bayesian inference, and particle filtering} to enhance \textit{accuracy, efficiency, and real-time adaptability}. These advancements are crucial for applications in \textit{environmental monitoring, emergency response, and autonomous search operations}.

\subsection{Source Term Estimation: Algorithms and Technical Details}
\textbf{Introduction:}
Source Term Estimation (STE) aims to determine the location, release rate, and temporal characteristics of hazardous emissions such as chemical, biological, or radiological (CBR) pollutants. This problem is highly nonlinear, ill-posed, and affected by uncertainties in environmental conditions and sensor noise. Traditional approaches include inverse modeling and static sensor networks, whereas modern techniques leverage mobile robots, Bayesian inference, and particle filtering.

\textbf{Bayesian Inference for Source Term Estimation:}
Bayesian inference provides a probabilistic framework for estimating source parameters given noisy sensor measurements. The estimation process follows Bayes' theorem:
\begin{equation}
P(\boldsymbol{\theta}_k | \mathbf{z}_{1:k}) = \frac{P(\mathbf{z}_{1:k} | \boldsymbol{\theta}k) P(\boldsymbol{\theta}k)}{P(\mathbf{z}_{1:k})}
\end{equation}
where $\boldsymbol{\theta}k$ represents the source term parameters (e.g., location and release rate) at time step $k$, $\mathbf{z}_{1:k}$ denotes observed sensor data up to time $k$, and $P(\mathbf{z}_{1:k} | \boldsymbol{\theta}_k)$ is the likelihood function derived from an atmospheric dispersion model. Bayesian inference is often implemented using Sequential Monte Carlo methods such as particle filtering.

\textbf{Particle Filter for Probabilistic Estimation}
Particle filtering (Sequential Monte Carlo) is widely used for dynamic state estimation in non-Gaussian, nonlinear systems. It approximates the posterior distribution using a set of weighted particles:
\begin{equation}
P(\boldsymbol{\theta}_k | \mathbf{z}_{1:k}) \approx \sum_{i=1}^{N} w_k^i \delta(\boldsymbol{\theta}_k - \boldsymbol{\theta}_k^i)
\end{equation}
where $\boldsymbol{\theta}_k^i$ represents the $i$-th particle and $w_k^i$ is its weight. 
This method continuously refines source term estimates by incorporating new measurements while directing a mobile robot to informative locations.

\textbf{Information-Based Sensor Planning}
To optimize data collection for source term estimation, an information-based search strategy is used. The objective is to guide the robot to locations that maximize information gain regarding the unknown source. The Kullback-Leibler (KL) divergence is used as a measure of information gain:
\begin{equation}
\mathcal{U}(a_k, \mathbf{z}_{k+1}) = D{KL} \left( P(\boldsymbol{\theta}{k+1} | \mathbf{z}_{1:k}, \mathbf{z}_{k+1}) | P(\boldsymbol{\theta}_{k+1} | \mathbf{z}_{1:k}) \right)
\end{equation}
where $a_k$ is the chosen action at time step $k$. The next measurement location is selected by maximizing the expected utility:
\begin{equation}
a_k^* = \arg\max{a_k} \mathbb{E}[\mathcal{U}(a_k, \mathbf{z}_{k+1})]
\end{equation}
This information-theoretic approach ensures that the robot systematically reduces uncertainty about the source parameters while navigating the environment.

\subsection{Tabel}
\begin{table}[H]
\centering
\caption{Results for Priori\_Scope = 0.3}

{\footnotesize 
}
\end{table}

\end{document}